\documentclass[twoside,11pt]{article}
\usepackage[preprint]{jmlr2e}
\usepackage{amsmath}
\usepackage{mathtools}

\newcommand{\R}{\mathbb{R}}

\newcommand{\diag}{\operatorname{diag}}
\newcommand{\rank}{\operatorname{rank}}
\newcommand{\sign}{\operatorname{sign}}
\newcommand{\ReLU}{\operatorname{ReLU}}
\newcommand{\GELU}{\operatorname{GELU}}

\ShortHeadings{Can an MLP Absorb Its Own Skip Connection?}{Mijoski and Karbevski}
\firstpageno{1}

\begin{document}

\title{Can an MLP Absorb Its Own Skip Connection?}

\author{\name Antonij Mijoski \email antonijmijo@gmail.com, antonij.mijoski@unistra.fr \\
       \addr IRMA, Universit\'{e} de Strasbourg, France
       \AND
       \name Marko Karbevski \email marko.karbevski@gmail.com \\
       \addr Independent Researcher}

\maketitle

\begin{abstract}
We study when a skip connection around a single-hidden-layer MLP can be
absorbed into a residual-free MLP of the same width. We first show that
for any architecture whose skip branch is an invertible linear map
(including Hyper-Connections and their manifold-constrained variants), the
problem reduces to the identity skip case. For homogeneous activations of
degree $k \neq 1$, such as $\ReLU^2$ and ReGLU, absorption is
unconditionally impossible by a degree argument. For gated activations
whose gate is differentiable at the origin with $g(0) = 0$, including
SwiGLU and GeGLU, a linearization argument gives the same conclusion.
These impossibility results
extend to arbitrary depth: a composition of $L$ residual blocks using
such activations cannot be replicated by any composition of $L$
residual-free blocks of the same width. For ungated ReLU and GELU,
the situation is richer. For generic weight matrices,
absorption holds at the single-block level if and only if there exists an
index set $S$ of size at least $d$ such that
$W_{\mathrm{down}}[:,S]\,W_{\mathrm{up}}[S,:] = -I_d$. This condition is
non-generic (it fails with probability one under continuous weight
distributions), so skip-connected and residual-free MLPs of the same width
represent generically disjoint function classes. Whether this disjointness
persists for deep compositions of ReLU or GELU blocks remains open.
\end{abstract}

\newpage
\tableofcontents

\section{Introduction}\label{sec:intro}

Skip connections were the breakthrough that enabled systematic training of
deep neural networks. Since their introduction by He et~al.\
(\citeyear{He2016}, \citeyear{He2016identity}), they have become present
in virtually every facet of state-of-the-art deep learning: the
Transformer \citep{Vaswani2017}, Vision Transformers
\citep{Dosovitskiy2021}, ConvNeXt \citep{Liu2022convnext}, LLaMA
\citep{Touvron2023, Dubey2024}, DeepSeek \citep{DeepSeek2024}, and Gemma
\citep{Gemma2024} all wrap every attention and MLP sub-layer in a branch
$x \mapsto x + \mathrm{MLP}(x)$. Recent work has extended this pattern
beyond the identity, replacing it with learnable mixing matrices
\citep{Zhu2024hc, Xie2025mhc}. The prevailing understanding is that skip
connections smooth the loss landscape \citep{Li2018} and help gradients
flow \citep{He2016identity, Xiong2020}. We ask a different question: do
skip connections change what functions an MLP can represent?

Given a fixed $\mathrm{MLP}_2(x) = W_{\mathrm{down}}\,
\sigma(W_{\mathrm{up}}\,x)$, when does there exist an
$\mathrm{MLP}_1(x) = V_{\mathrm{down}}\,\sigma(V_{\mathrm{up}}\,x)$ of
the same width such that $x + \mathrm{MLP}_2(x) = \mathrm{MLP}_1(x)$ for
all inputs $x$? If such absorption is possible, the skip connection is
\emph{representationally redundant} at that width. If not, the skip
connection grants access to a different region of function space. The
converse question is equally natural: can a residual-free MLP be rewritten
as a skip-connected one? Since the equation $\mathrm{MLP}_1(x) = x +
\mathrm{MLP}_2(x)$ rearranges to $(-I_d)\,x + \mathrm{MLP}_1(x) =
\mathrm{MLP}_2(x)$, both directions are instances of the same absorption
problem, and our results settle both.

\paragraph{Contributions.} We show that when the skip branch is an
invertible linear map, the absorption problem reduces to the identity skip
case, covering Hyper-Connections and their variants
(Lemma~\ref{lem:reduction}). For homogeneous activations of degree
$k \neq 1$ ($\ReLU^2$, ReGLU), we prove absorption is unconditionally
impossible (Proposition~\ref{prop:homogeneous}), and we extend this to
gated activations whose gate is differentiable at the origin with $g(0) = 0$
(SwiGLU, GeGLU; \citealt{Shazeer2020}) via a linearization argument
(Proposition~\ref{prop:gated}). These impossibility results extend to
compositions of arbitrary depth (Corollary~\ref{cor:depth}), settling the
question for the activation functions used in current state-of-the-art
models. For ungated ReLU and GELU, we give if-and-only-if
characterizations for generic weights at the single-block
level (Theorems~\ref{thm:relu} and~\ref{thm:gelu}), both reducing to the
same submatrix condition
$W_{\mathrm{down}}[:,S]\,W_{\mathrm{up}}[S,:] = -I_d$. A corollary shows
that skip-connected and residual-free MLPs are generically disjoint
function classes (Corollary~\ref{cor:generic-disjointness}); we leave the
multi-layer case for ReLU and GELU as an open problem.

\section{Related Work}\label{sec:related}

\paragraph{Residual networks and reparametrization.}
He et~al.\ (\citeyear{He2016}) introduced residual connections and
observed that they ease optimization in deep networks. A follow-up study
(He et~al., \citeyear{He2016identity}) showed that placing the identity
path outside all nonlinearities is critical for training very deep
ResNets. Subsequent work has studied the effect of skip connections on the
loss landscape \citep{Li2018}, on representational power \citep{Lu2017},
and on implicit regularization \citep{Hardt2017}. Our results complement
this line by characterizing exactly when the residual structure is
algebraically redundant. Graef (\citeyear{Graef2024}) and Karbevski and
Mijoski (\citeyear{Karbevski2025}) studied weight elimination in skipless
and skip-connected transformers respectively; the latter provided a proof
of the absorption characterization for one activation. The present paper
provides complete proofs for both ReLU and GELU, the impossibility results
for homogeneous and gated activations, and the general reduction to the
identity skip case.

\paragraph{The role of skip connections beyond optimization.}
Ji et~al.\ (\citeyear{Ji2025}) showed that self-attention is uniquely
dependent on skip connections for trainability: removing them from the
attention block causes catastrophic failure, while MLP blocks degrade more
gracefully. This asymmetry suggests that skip connections play different
algebraic roles around different sub-layers, a theme our work makes
precise for the MLP case. On the analysis side, Elhage et~al.\
(\citeyear{Elhage2021}) and Olsson et~al.\ (\citeyear{Olsson2022}) have
studied the computational role of the residual stream in transformers.
Hyper-Connections \citep{Zhu2024hc} and their manifold-constrained
variants \citep{Xie2025mhc} generalize the skip to a learnable mixing
matrix; since these matrices are generically invertible
(Section~\ref{sec:reduction}), all our results apply.

\paragraph{Geometry of ReLU networks.}
Arora et~al.\ (\citeyear{Arora2018}) studied the reparametrization
symmetries of deep ReLU networks, showing that positive rescaling of
neurons preserves the computed function. Montu\-far et~al.\
(\citeyear{Montufar2014}) analyzed the number of linear regions in deep
ReLU networks. Our necessity proof for the ReLU case builds on similar
geometric ideas, using the arrangement of kink hyperplanes to constrain
the relationship between weight matrices. For further background on
functional equivalence of ReLU networks, see Phuong and Lampert
(\citeyear{Phuong2020}).

\paragraph{Ridge functions and neural networks.}
The uniqueness theorem for ridge function representations (Pinkus,
\citeyear{Pinkus2015}, Chapter~3; Buhmann and Pinkus,
\citeyear{BuhmannPinkus1999}) provides the key linear independence result
used in our GELU proof. The connection between ridge function theory and
neural network expressivity has been explored by several authors; see,
e.g., Pinkus (\citeyear{Pinkus1999}) and Kainen and K\r{u}rkov\'{a}
(\citeyear{Kainen2010}).

\section{Preliminaries and Notation}\label{sec:prelim}

We work with column vectors throughout. For matrices $A \in \R^{d \times
N}$ and $B \in \R^{N \times d}$, we write $A[:,S]$ for the submatrix of
columns indexed by $S \subseteq \{1,\ldots,N\}$ and $B[S,:]$ for the
submatrix of rows indexed by $S$. The coordinate projector onto $S$ is the
diagonal matrix $\Pi_S \in \{0,1\}^{N \times N}$ with $(\Pi_S)_{ii} = 1$
if and only if $i \in S$, so that $A\,\Pi_S\,B = A[:,S]\,B[S,:]$. We
denote by $I_d$ the $d \times d$ identity matrix.

\begin{definition}[Pairwise non-collinearity]
A collection of vectors $\{a_1, \ldots, a_N\} \subset \R^d$ is
\emph{pairwise non-collinear} if for all $i \neq j$, the vector $a_i$ is
not a scalar multiple of $a_j$.
\end{definition}

\begin{definition}[Ridge function]
Let $a \in \R^d \setminus \{0\}$ and let $f : \R \to \R$ be a univariate
function. The function $\R^d \to \R$ defined by $x \mapsto f(a^\top x)$ is
called a \emph{ridge function} with direction $a$.
\end{definition}

Both the ReLU, $\ReLU(z) = \max(0,z)$, and the GELU, $\GELU(z) =
z\,\Phi(z)$ where $\Phi$ denotes the standard normal CDF, are univariate
functions. Throughout this paper, if $f : \R \to \R$ is univariate and
$v \in \R^N$, we write $f(v)$ for the vector obtained by applying $f$ to
each coordinate of $v$.

We consider single-hidden-layer MLPs of the form
\begin{equation}\label{eq:mlp-def}
  \mathrm{MLP}(x) = W_{\mathrm{down}}\,\sigma(W_{\mathrm{up}}\,x),
  \qquad
  W_{\mathrm{up}} \in \R^{N \times d},\;
  W_{\mathrm{down}} \in \R^{d \times N},
\end{equation}
where $\sigma : \R \to \R$ is an activation function and $N \ge d$.
Sections~\ref{sec:relu} and~\ref{sec:gelu} treat this form with $\sigma =
\ReLU$ and $\sigma = \GELU$ respectively.

\paragraph{Problem setup.}
The absorption question asks when a skip-connected MLP can be rewritten as
a residual-free MLP of the same width. The answer differs sharply between
gated and ungated architectures: for gated MLPs, absorption is
unconditionally impossible for all weight matrices, so no measure-theoretic
framework is needed. For ungated MLPs, exact absorption is possible for
special weight configurations but fails generically, and stating this
precisely requires equipping the parameter space with a measure.

\emph{Gated MLPs.}\;
Section~\ref{sec:impossibility} considers a gated architecture.

\begin{definition}[Gated MLP]\label{def:gated-mlp}
Let $d \ge 1$, $N \ge d$, and let $g : \R \to \R$ be a univariate
function (the \emph{gate activation}). A \emph{single-hidden-layer gated
MLP} is a map $\R^d \to \R^d$ of the form
\[
  x \mapsto W_{\mathrm{down}}
  \bigl(g(W_{\mathrm{gate}}\,x) \odot (W_{\mathrm{val}}\,x)\bigr),
\]
where $W_{\mathrm{gate}}, W_{\mathrm{val}} \in \R^{N \times d}$,
$W_{\mathrm{down}} \in \R^{d \times N}$, and $\odot$ denotes the
coordinatewise (Hadamard) product. This form involves three weight
matrices and does not reduce to \eqref{eq:mlp-def}.
\end{definition}

SwiGLU ($g = \mathrm{SiLU}$) and GeGLU ($g = \GELU$), both introduced
by Shazeer (\citeyear{Shazeer2020}), are instances of this definition.
The impossibility results for gated MLPs
(Propositions~\ref{prop:homogeneous} and~\ref{prop:gated},
Corollary~\ref{cor:depth}) hold for \emph{every} choice of weight
matrices; no genericity assumption is required.

\emph{Ungated MLPs.}\;
For ungated activations ($\sigma = \ReLU$ or $\GELU$), the situation is
more delicate: absorption is possible for certain weight configurations and
impossible for others. To make this precise, the weight matrices
$(W_{\mathrm{up}}, W_{\mathrm{down}})$ live in the parameter space
\[
  \Theta := \R^{N \times d} \times \R^{d \times N},
\]
which we equip with Lebesgue measure. For a fixed activation $\sigma$,
$\Theta$ parametrizes two families of functions $\R^d \to \R^d$:
\begin{align*}
  \mathcal{F}_{\mathrm{skip}}
  &:= \bigl\{x \mapsto W_{\mathrm{down}}\,\sigma(W_{\mathrm{up}}\,x) + x
      : (W_{\mathrm{up}}, W_{\mathrm{down}}) \in \Theta\bigr\}, \\
  \mathcal{F}_{\mathrm{ff}}
  &:= \bigl\{x \mapsto V_{\mathrm{down}}\,\sigma(V_{\mathrm{up}}\,x)
      : (V_{\mathrm{up}}, V_{\mathrm{down}}) \in \Theta\bigr\}.
\end{align*}
The absorption question asks whether
$\mathcal{F}_{\mathrm{skip}} \subseteq \mathcal{F}_{\mathrm{ff}}$.
Theorems~\ref{thm:relu} and~\ref{thm:gelu} give an exact characterization
for generic parameters, and Corollary~\ref{cor:generic-disjointness} shows
that the set of parameters for which absorption holds has Lebesgue measure
zero in $\Theta$.

\paragraph{Parity decompositions.}
Both main proofs rely on decomposing the activation into even and odd
parts. Writing $\ReLU(z) = \tfrac{1}{2}(z + |z|)$, we obtain
\begin{equation}\label{eq:relu-parity}
  \ReLU(z) = \underbrace{|z|/2}_{\text{even}} \;+\;
  \underbrace{z/2}_{\text{odd}}.
\end{equation}
For the GELU, $\GELU(z) = z\,\Phi(z)$, the identity $\Phi(-z) = 1 -
\Phi(z)$ gives $\GELU(z) - \GELU(-z) = z$. Defining the even component
$E(z) := z(\Phi(z) - \tfrac{1}{2})$,
\begin{equation}\label{eq:gelu-parity}
  \GELU(z) = \underbrace{E(z)}_{\text{even}} + \underbrace{z/2}_{\text{odd}}.
\end{equation}
In both cases, the odd part is exactly $z/2$. This shared structure
produces the identical algebraic condition in Theorems~\ref{thm:relu}
and~\ref{thm:gelu}.

\section{Reduction to the Identity Skip}\label{sec:reduction}

Modern architectures do not always use a pure identity skip. In
Hyper-Connections (Zhu et~al., \citeyear{Zhu2024hc}) and their
manifold-constrained variants (Xie et~al., \citeyear{Xie2025mhc}), the
skip branch is replaced by a learnable linear map $M$, yielding blocks of
the form
\begin{equation}\label{eq:general-skip}
  M\,x + W_{\mathrm{down}}\,\sigma(W_{\mathrm{up}}\,x).
\end{equation}
The following lemma shows that when $M$ is invertible, the absorption
problem for \eqref{eq:general-skip} reduces to the standard identity skip
case.

\begin{lemma}[Invertible reduction]\label{lem:reduction}
Let $d \ge 1$, $N \ge d$, $M \in \R^{d \times d}$ invertible, and
$\sigma : \R \to \R$ applied coordinatewise. For any $W_{\mathrm{up}},
V_{\mathrm{up}} \in \R^{N \times d}$ and $W_{\mathrm{down}},
V_{\mathrm{down}} \in \R^{d \times N}$,
\[
  M\,x + W_{\mathrm{down}}\,\sigma(W_{\mathrm{up}}\,x)
  = V_{\mathrm{down}}\,\sigma(V_{\mathrm{up}}\,x)
  \qquad \forall\, x \in \R^d
\]
if and only if
\[
  x + (M^{-1}W_{\mathrm{down}})\,\sigma(W_{\mathrm{up}}\,x)
  = (M^{-1}V_{\mathrm{down}})\,\sigma(V_{\mathrm{up}}\,x)
  \qquad \forall\, x \in \R^d.
\]
\end{lemma}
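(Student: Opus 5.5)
The proof is a direct equivalence obtained by multiplying both sides of the identity by a fixed invertible matrix. Because $M^{-1}$ and $M$ are linear and act only on the output space $\R^d$, they commute past the coordinatewise nonlinearity. Multiplying by them on the left therefore leaves the inner expressions $\sigma(W_{\mathrm{up}}x)$ and $\sigma(V_{\mathrm{up}}x)$ untouched and only rescales the down-projections.

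For the forward direction, I assume
\[
  M\,x + W_{\mathrm{down}}\,\sigma(W_{\mathrm{up}}\,x) = V_{\mathrm{down}}\,\sigma(V_{\mathrm{up}}\,x)
\]
for all $x \in \R^d$. I left-multiply both sides by $M^{-1}$, which exists by hypothesis. Distributing $M^{-1}$ over the sum and using associativity of matrix multiplication gives
\[
  M^{-1}M\,x + (M^{-1}W_{\mathrm{down}})\,\sigma(W_{\mathrm{up}}\,x) = (M^{-1}V_{\mathrm{down}})\,\sigma(V_{\mathrm{up}}\,x).
\]
Since $M^{-1}M = I_d$, this is exactly the second identity, and it holds for every $x$.

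For the converse, I left-multiply the second identity by $M$. Using $M M^{-1} = I_d$ and associativity again, I recover the first identity pointwise in $x$.

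There is no genuine obstacle here. The only point to check is that the activation never interacts with $M$: the arguments $W_{\mathrm{up}}x$ and $V_{\mathrm{up}}x$ are unchanged, and $M^{-1}W_{\mathrm{down}}$ and $M^{-1}V_{\mathrm{down}}$ are again $d \times N$ matrices. So both sides remain MLPs of the same width $N$, which is what makes the reduction meaningful for the absorption problem.
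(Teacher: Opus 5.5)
Your proof is correct and is exactly the paper's argument: left-multiply by $M^{-1}$, and by $M$ for the converse that you spell out explicitly, using associativity so that $M^{-1}$ lands on $W_{\mathrm{down}}$ and $V_{\mathrm{down}}$. One wording point: $M^{-1}$ does not ``commute past'' $\sigma$; it never reaches $\sigma$ at all, because it multiplies the down-projections from the left, and that is what your displayed computation actually uses.
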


\begin{proof}
Left-multiply both sides by $M^{-1}$.
\end{proof}

For HC, the mixing matrices are unconstrained learnable parameters and
therefore generically invertible (the set of singular $d \times d$
matrices has Lebesgue measure zero). For mHC, they are constrained to the
Birkhoff polytope of doubly stochastic matrices; since
$\det$ is a non-trivial polynomial on this polytope (the identity is doubly
stochastic with $\det = 1$), the singular locus has measure zero, so mHC
matrices are also generically invertible.
Consequently, all results in this paper apply to these architectures with
the down-projection matrices replaced by $M^{-1}W_{\mathrm{down}}$ and
$M^{-1}V_{\mathrm{down}}$.

We therefore study, without loss of generality for the invertible case,
the identity skip absorption problem
\begin{equation}\label{eq:absorption-problem}
  x + W_{\mathrm{down}}\,\sigma(W_{\mathrm{up}}\,x)
  = V_{\mathrm{down}}\,\sigma(V_{\mathrm{up}}\,x)
  \qquad \forall\, x \in \R^d,
\end{equation}
which is the object of the remainder of the paper.

\section{Impossibility for Modern Activations}\label{sec:impossibility}

For the activation functions used in current state-of-the-art models,
absorption is unconditionally impossible. We establish this through two
complementary arguments: a homogeneity argument covering $\ReLU^2$
(So et~al., \citeyear{So2022}), used in the Nemotron family
\citep{NVIDIA2025nemotron} and the modded-nanogpt codebase
\citep{Jordan2024moddednanogpt}; and a Taylor expansion argument
covering the gated activations SwiGLU and GeGLU
\citep{Shazeer2020}: SwiGLU is used in LLaMA
\citep{Touvron2023, Dubey2024}, DeepSeek \citep{DeepSeek2024}, Mistral
\citep{Mistral2023}, and Qwen \citep{Qwen2024}; GeGLU in Gemma
\citep{Gemma2024}.

\begin{proposition}[Impossibility for homogeneous activations of degree $k \neq 1$]
\label{prop:homogeneous}
Let $\sigma : \R \to \R$ be homogeneous of degree $k > 0$
with $k \neq 1$, i.e., $\sigma(\lambda z) = \lambda^k \sigma(z)$ for all
$\lambda > 0$. Then for any $N \ge d \ge 1$, any matrices
$W_{\mathrm{up}}, V_{\mathrm{up}} \in \R^{N \times d}$ and
$W_{\mathrm{down}}, V_{\mathrm{down}} \in \R^{d \times N}$, and any
$M \in \R^{d \times d}$ with $M \neq 0$,
\[
  M\,x + W_{\mathrm{down}}\,\sigma(W_{\mathrm{up}}\,x)
  \neq V_{\mathrm{down}}\,\sigma(V_{\mathrm{up}}\,x)
\]
for some $x \in \R^d \setminus \{0\}$.
\end{proposition}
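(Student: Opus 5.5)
The plan is a degree argument by contradiction. Suppose the two sides agree for all $x \ne 0$. Fix any $x \ne 0$ and any $\lambda > 0$, and evaluate the identity at $\lambda x$. Coordinatewise, $\sigma(\lambda W_{\mathrm{up}}x) = \lambda^k \sigma(W_{\mathrm{up}}x)$, and likewise for $V_{\mathrm{up}}$. So for every $\lambda > 0$,
\[
  \lambda\, M x + \lambda^k\, W_{\mathrm{down}}\,\sigma(W_{\mathrm{up}}\,x)
  = \lambda^k\, V_{\mathrm{down}}\,\sigma(V_{\mathrm{up}}\,x).
\]
Setting $u := M x$ and $w := V_{\mathrm{down}}\,\sigma(V_{\mathrm{up}}x) - W_{\mathrm{down}}\,\sigma(W_{\mathrm{up}}x)$ gives $\lambda u = \lambda^k w$ for all $\lambda > 0$.

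Because $k \ne 1$, the functions $\lambda \mapsto \lambda$ and $\lambda \mapsto \lambda^k$ are linearly independent on $(0,\infty)$. Concretely, evaluate at $\lambda = 1$ and $\lambda = 2$: this gives $u = w$ and $2u = 2^k w$, hence $(2 - 2^k)u = 0$. Since $2^k \ne 2$, we get $u = 0$, i.e.\ $Mx = 0$.

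This holds for every $x \ne 0$, so $M$ vanishes on $\R^d \setminus \{0\}$ and therefore $M = 0$. That contradicts the hypothesis $M \ne 0$, so some $x \ne 0$ violates the equality. Equivalently, pick any $x$ with $Mx \ne 0$, which exists because $M \ne 0$; then for all but at most one $\lambda>0$ the point $\lambda x$ witnesses the failure.

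There is no real obstacle here. The only point needing care is that homogeneity is assumed only for $\lambda > 0$, so we must use positive rescalings only, which the argument does. No invertibility of $M$ and no form of $\sigma$ beyond homogeneity is needed; the conclusion is uniform over all weight matrices and all widths $N \ge d$. Gated homogeneous activations such as ReGLU fit the same template, since each hidden coordinate scales as $\lambda^k$ with $k = 2$; I would remark on this but not need it for the statement as given.
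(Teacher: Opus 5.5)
Your proof is correct and uses the same homogeneity-scaling argument as the paper: evaluate at $\lambda x$ and exploit the mismatch between $\lambda$ and $\lambda^k$. The paper instead takes $\lambda \to 0$ (for $k>1$) or $\lambda \to \infty$ (for $0<k<1$); your comparison of $\lambda=1$ with $\lambda=2$ handles both cases at once and is slightly more careful about the $x \neq 0$ quantifier.
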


\begin{proof}
Suppose for contradiction the identity holds for all $x$. Evaluating at
$\lambda x$ for $\lambda > 0$:
\[
  \lambda\,M\,x +
  \lambda^k\, W_{\mathrm{down}}\,\sigma(W_{\mathrm{up}}\,x)
  = \lambda^k\, V_{\mathrm{down}}\,\sigma(V_{\mathrm{up}}\,x).
\]
When $k > 1$, dividing by $\lambda$ and letting $\lambda \to 0$ gives
$Mx = 0$ for all $x$, contradicting $M \neq 0$. When $0 < k < 1$,
dividing by $\lambda^k$ and letting $\lambda \to \infty$ gives the same
contradiction.
\end{proof}

A degree-$k$ MLP ($k \neq 1$) and a linear map live in incompatible
homogeneity classes, so their sum can never equal a pure degree-$k$
expression. This covers $\ReLU^2$ and also ReGLU, whose neurons compute
$\ReLU(a_i^\top x) \cdot (b_i^\top x)$ and are therefore homogeneous of
degree 2.

The dominant modern activations, SwiGLU and GeGLU, are not homogeneous,
but a linearization argument yields the same conclusion.

\begin{proposition}[Impossibility for gated activations]
\label{prop:gated}
Let $d \ge 1$, $N \ge d$, and let $g : \R \to \R$ be differentiable at
the origin with $g(0) = 0$. Then for any $M \in \R^{d \times d}$ with
$M \neq 0$, and any single-hidden-layer gated MLPs
(Definition~\ref{def:gated-mlp}) with weight matrices
$W_{\mathrm{down}}, W_{\mathrm{gate}}, W_{\mathrm{val}}$ and
$V_{\mathrm{down}}, V_{\mathrm{gate}}, V_{\mathrm{val}}$,
\[
  M\,x + W_{\mathrm{down}}
  \bigl(g(W_{\mathrm{gate}}\,x) \odot (W_{\mathrm{val}}\,x)\bigr)
  \neq V_{\mathrm{down}}
  \bigl(g(V_{\mathrm{gate}}\,x) \odot (V_{\mathrm{val}}\,x)\bigr)
\]
for some $x \in \R^d$.
\end{proposition}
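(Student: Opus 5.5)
Suppose, for contradiction, that the identity holds for all $x \in \R^d$. The plan is a linearization argument at the origin, in the spirit of the proof of Proposition~\ref{prop:homogeneous}, with Taylor expansion of the gate replacing exact homogeneity. Fix an arbitrary $x$, substitute $\lambda x$ for $x$ with $\lambda > 0$, and divide by $\lambda$. The linear term gives exactly $Mx$, independent of $\lambda$. Each gated neuron on the left contributes
\[
  \tfrac{1}{\lambda}\, g(\lambda\, w_i^\top x)\,(\lambda\, u_i^\top x)
  = g(\lambda\, w_i^\top x)\,(u_i^\top x),
\]
where $w_i^\top$ and $u_i^\top$ are the $i$-th rows of $W_{\mathrm{gate}}$ and $W_{\mathrm{val}}$. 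The right-hand neurons behave the same way with $V_{\mathrm{gate}}$ and $V_{\mathrm{val}}$.

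Next we let $\lambda \to 0^+$. Since $g$ is differentiable at $0$, it is continuous there, so $g(\lambda\, w_i^\top x) \to g(0) = 0$ for every fixed $x$. Each gated term therefore tends to $0$, and the factor $u_i^\top x$ is fixed and finite. The divided identity becomes $Mx + o(1) = o(1)$, so $Mx = 0$. Since $x$ was arbitrary, $M = 0$, contradicting $M \neq 0$. So some $x$ violates the identity, and indeed some $x \neq 0$, because at $x = 0$ both sides vanish trivially.

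Strictly, only continuity of $g$ at $0$ together with $g(0)=0$ is used; differentiability is more than needed. It does give a sharper picture if one expands to second order: $g(\lambda z)\,\lambda y = g'(0)\,\lambda^2 z y + o(\lambda^2)$. So the gated MLP is $O(\lambda^2)$ near the origin, while $Mx$ is exactly order $\lambda$. This is the same incompatibility of orders that drives Proposition~\ref{prop:homogeneous}.

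The only step needing care is justifying the limit. It is pointwise in $x$, and the number of neurons $N$ is finite, so no uniformity is required. I do not expect a real obstacle here. The main point to check is that dividing by $\lambda$ cancels exactly one factor of $\lambda$, coming from the value branch $W_{\mathrm{val}}(\lambda x)$. This cancellation uses only the linearity of the value projection, so it holds for every choice of weight matrices, consistent with the claim that no genericity assumption is needed.
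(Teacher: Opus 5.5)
Your proof is correct and is essentially the paper's argument. The paper also linearizes at the origin: it uses $g(z) = g'(0)z + o(z)$ to show each gated MLP is $O(\|x\|^2)$, hence $Mx = O(\|x\|^2)$, forcing $M = 0$; you make the same order comparison along rays $\lambda x$, and your remark that continuity of $g$ at $0$ with $g(0)=0$ already suffices (the gated terms are then $o(\|x\|)$) is also correct.
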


\begin{proof}
Since $g(0) = 0$ and $g$ is differentiable at $0$, we have $g(z) =
g'(0)\,z + o(z)$ as $z \to 0$. Each gated neuron satisfies
\[
  g(w_{\mathrm{gate},i}^\top x) \cdot (w_{\mathrm{val},i}^\top x)
  = g'(0)\,(w_{\mathrm{gate},i}^\top x)(w_{\mathrm{val},i}^\top x)
  + o(\|x\|) \cdot O(\|x\|),
\]
so the full gated MLP output is $O(\|x\|^2)$ as $x \to 0$. In the
absorption equation, both MLP terms are $O(\|x\|^2)$, hence $M\,x =
O(\|x\|^2)$. Since $M\,x$ is linear in $x$, this forces $M = 0$.
\end{proof}

Both impossibility results extend from single blocks to compositions of
arbitrary depth.

\begin{corollary}[Depth extension]\label{cor:depth}
Let $d \ge 1$, $N \ge d$, and $L \ge 1$. Let $\sigma$ be either
homogeneous of degree $k > 1$ (applied coordinatewise) or a gated
activation with $g$ differentiable at the origin and $g(0) = 0$. Consider
a composition of $L$ residual blocks
\[
  F(x) = f_L \circ \cdots \circ f_1(x),
  \qquad f_i(x) = x + \mathrm{MLP}_i(x),
\]
and a composition of $L$ residual-free blocks
\[
  H(x) = h_L \circ \cdots \circ h_1(x),
  \qquad h_i(x) = \mathrm{MLP}_i'(x),
\]
all of width $N$. Then $F \neq H$.
\end{corollary}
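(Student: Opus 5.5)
The plan is to run the same local analysis at the origin used in Propositions~\ref{prop:homogeneous} and~\ref{prop:gated}, now through the whole composition. The key observation is that in both cases each $\mathrm{MLP}$ block vanishes to second order at the origin. For a homogeneous $\sigma$ of degree $k>1$, we have $\sigma(0)=0$ and $|\sigma(z)| = |z|^k|\sigma(\pm 1)| = O(|z|^k)$. Hence $\mathrm{MLP}(x) = O(\|x\|^k) = o(\|x\|)$. For a gated activation, the computation in the proof of Proposition~\ref{prop:gated} gives $\mathrm{MLP}(x) = O(\|x\|^2)$. In both cases every block satisfies $\mathrm{MLP}(0)=0$ and $\mathrm{MLP}(x) = o(\|x\|)$ as $x \to 0$. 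Equivalently, each $\mathrm{MLP}$ is differentiable at $0$ with value $0$ and zero derivative.

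Next I will compute the first-order behaviour of each composition at the origin. For the residual-free stack, induct on $L$. Each $h_i$ maps $0$ to $0$ and satisfies $\|h_i(y)\| \le C\|y\|$ for small $y$, indeed $o(\|y\|)$. So $H(x) = o(\|x\|)$; concretely, $h_1(x) = o(\|x\|)$, and each later layer keeps the argument small and the output $o$ of it. For the residual stack, $f_i(y) = y + o(\|y\|)$, so each $f_i$ fixes $0$ and has derivative $I_d$ there. Composing, $f_1(x) = x + o(\|x\|)$ is $O(\|x\|)$ and tends to $0$. Inductively, $F(x) = x + o(\|x\|)$, by a chain rule for maps merely differentiable at the fixed point $0$.

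Now suppose $F = H$ on all of $\R^d$. Then $x + o(\|x\|) = o(\|x\|)$, so $x = o(\|x\|)$ as $x \to 0$. Taking $x = t e_1$ with $t \to 0^+$ gives $1 = o(1)$, a contradiction. Hence $F \neq H$. The same argument works with $x$ replaced by $Mx$ for any nonzero linear skip. It also works with differing numbers of blocks, which matches the claim in the abstract.

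The only step needing care is the composition estimate. I must check that $o(\|y\|)$ remainders compose correctly when the inner map is only $O(\|x\|)$ near $0$. This follows from the standard fact that if $\phi(0)=0$ and $\|\phi(x)\| \le C\|x\|$ near $0$, then $r(\phi(x)) = o(\|x\|)$ whenever $r(y) = o(\|y\|)$. For the homogeneous case I also need $\sigma(0)=0$. This follows from homogeneity with $k>0$: $\sigma(0) = \lambda^k \sigma(0)$ for all $\lambda>0$ forces $\sigma(0)=0$.

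It is worth noting why the statement uses $k>1$ rather than $k\neq1$. For $0<k<1$, the small-$x$ argument fails, and the large-$x$ argument of Proposition~\ref{prop:homogeneous} does not obviously survive composition. So I would not attempt that case.
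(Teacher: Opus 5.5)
Your proof is correct and is essentially the paper's argument: both compare first-order behaviour at the origin, where the residual stack has derivative $I_d$ and the residual-free stack has derivative $0$. Your version is slightly more careful: for $1<k<2$ the blocks are only $O(\|x\|^k)$, not the $O(\|x\|^2)$ the paper asserts, and $o(\|x\|)$ combined with the pointwise chain rule at the fixed point $0$ is exactly what is needed.

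One caveat on your aside about non-identity skips: with a skip $M_i x$ in block $i$, the residual stack has derivative $M_L\cdots M_1$ at the origin, so the extension needs this product to be nonzero (e.g.\ each $M_i$ invertible); a nilpotent $M\neq 0$ repeated in every block already breaks it for $L\ge 2$.
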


\begin{proof}
In both cases, the MLP output satisfies $\mathrm{MLP}(x) = O(\|x\|^2)$
near the origin. For the residual composition, the degree-1 Taylor term
passes through every block unchanged: the Jacobian of $f_i$ at the origin
is $I_d$, so the Jacobian of $F$ at the origin is $I_d$. For the
residual-free composition, the Jacobian of each $h_i$ at the origin is
$0$ (since the MLP has no linear term), so the Jacobian of $H$ at the
origin is $0$. These cannot agree.
\end{proof}

This settles the absorption question at arbitrary depth for every
activation function used in current state-of-the-art models: $\ReLU^2$,
ReGLU, SwiGLU, and GeGLU. A deep residual network using any of these
activations cannot be replicated by a deep residual-free network of the
same depth and width.

The remainder of the paper addresses the case where absorption is
\emph{sometimes} possible: ungated activations whose parity structure is
compatible with degree 1.

\section{The ReLU Case}\label{sec:relu}

\begin{theorem}[Identity absorption for ReLU MLPs]\label{thm:relu}
Let $d \ge 2$ and $N \ge d$. Let $W_{\mathrm{up}} \in \R^{N \times d}$
and $W_{\mathrm{down}} \in \R^{d \times N}$ be given matrices such that
$W_{\mathrm{up}}$ has pairwise non-collinear, non-zero rows and
$W_{\mathrm{down}}$ has non-zero columns. There exist matrices
$V_{\mathrm{up}} \in \R^{N \times d}$ and $V_{\mathrm{down}} \in \R^{d
\times N}$ such that
\begin{equation}\label{eq:relu-absorption}
  W_{\mathrm{down}}\,\ReLU(W_{\mathrm{up}}\,x) + x
  = V_{\mathrm{down}}\,\ReLU(V_{\mathrm{up}}\,x)
  \qquad \forall\, x \in \R^d
\end{equation}
if and only if there exists $S \subseteq \{1,\ldots,N\}$ with $|S| \ge d$
such that
\begin{equation}\label{eq:submatrix-relu}
  W_{\mathrm{down}}[:,S]\,W_{\mathrm{up}}[S,:] = -I_d.
\end{equation}
\end{theorem}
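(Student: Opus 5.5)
The proof splits into a constructive sufficiency direction, based on the odd part of $\ReLU$ in \eqref{eq:relu-parity}, and a necessity direction, based on matching the kink hyperplanes and gradient jumps of the two sides. Write $c_i$ for the $i$-th column of $W_{\mathrm{down}}$ and $w_i^\top$ for the $i$-th row of $W_{\mathrm{up}}$. The basic identity is
\[
\ReLU(z)-\ReLU(-z)=z.
\]

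\textbf{Sufficiency.} Assume \eqref{eq:submatrix-relu} holds, i.e.\ $\sum_{i\in S} c_i w_i^\top=-I_d$. Then
\[
x=-\sum_{i\in S}c_i\,w_i^\top x=-\sum_{i\in S}c_i\bigl(\ReLU(w_i^\top x)-\ReLU(-w_i^\top x)\bigr).
\]
Adding this to $W_{\mathrm{down}}\ReLU(W_{\mathrm{up}}x)$ cancels the terms $c_i\ReLU(w_i^\top x)$ for $i\in S$ and replaces them by $c_i\ReLU(-w_i^\top x)$. So I take $V_{\mathrm{down}}=W_{\mathrm{down}}$ and $V_{\mathrm{up}}=(I_N-2\Pi_S)W_{\mathrm{up}}$, which flips the sign of the rows indexed by $S$. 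This has width $N$ and satisfies \eqref{eq:relu-absorption}. The condition $|S|\ge d$ is automatic, since $\rank(W_{\mathrm{down}}[:,S]W_{\mathrm{up}}[S,:])\le |S|$.

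\textbf{Necessity, step 1: kinks of the left side.} Assume \eqref{eq:relu-absorption} holds. Both sides are continuous piecewise-linear functions. Because $d\ge 2$ and the $w_i$ are nonzero and pairwise non-collinear, the hyperplanes $H_i=w_i^{\perp}$ are pairwise distinct. Hence each $H_i$ contains a point $x_0$ lying on no other $H_k$, and no other $v_j^\perp$ either unless $v_j^\perp=H_i$; such a point exists because a hyperplane is not covered by finitely many other hyperplanes. Near $x_0$, the Jacobian of the left side jumps by $c_i w_i^\top$ when crossing $H_i$ in the direction of $w_i$. This jump is a nonzero rank-one matrix because $c_i\neq 0$ and $w_i\neq0$.

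\textbf{Necessity, step 2: matching neurons.} Neurons of the right side with $v_j=0$ contribute nothing, and a neuron with $v_j\neq0$ contributes a Jacobian jump only across $v_j^\perp$. So each of the $N$ distinct hyperplanes $H_i$ must equal $v_j^\perp$ for at least one of the $N$ neurons. By pigeonhole there is a bijection under which $v_{\pi(i)}=\alpha_i w_i$ with $\alpha_i\neq0$, and every neuron is used exactly once.

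Using $\ReLU(\alpha z)=|\alpha|\ReLU(\sign(\alpha)z)$, the $i$-th right-hand neuron equals $u_i|\alpha_i|\ReLU(\pm w_i^\top x)$, where $u_i$ is the corresponding column of $V_{\mathrm{down}}$. Crossing $H_i$ in the direction of $w_i$ gives a jump of $u_i|\alpha_i|w_i^\top$ in either sign case. Matching jumps therefore forces $u_i|\alpha_i|=c_i$. Let $S=\{i:\alpha_i<0\}$. For $i\notin S$ the two neurons coincide exactly. For $i\in S$ the difference is
\[
c_i\bigl(\ReLU(-w_i^\top x)-\ReLU(w_i^\top x)\bigr)=-c_i w_i^\top x.
\]
Substituting back, \eqref{eq:relu-absorption} collapses to the linear identity $x=-\sum_{i\in S}c_iw_i^\top x$ for all $x$, which is exactly \eqref{eq:submatrix-relu}. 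As before, rank forces $|S|\ge d$.

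\textbf{Main obstacle.} The delicate step is making the kink and jump matching rigorous. One must justify that the jump of a sum of ridge $\ReLU$s across $H_i$ at a generic point comes only from neurons whose direction is collinear with $w_i$. One must also rule out cancellations on the right side, such as several neurons sharing one hyperplane with cancelling jumps, which would leave some $H_i$ uncovered. I would handle this by evaluating the Jacobian on both sides of $H_i$ near a generic $x_0$ chosen off all other hyperplanes. The counting argument (at least $N$ distinct hyperplanes on the left and only $N$ neurons on the right) then excludes any duplication, so each hyperplane carries exactly one neuron and no cancellation can occur.
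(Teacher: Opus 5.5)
Your proposal is correct and follows essentially the paper's route. Sufficiency uses the same sign-flip construction $V_{\mathrm{up}}=(I_N-2\Pi_S)W_{\mathrm{up}}$, $V_{\mathrm{down}}=W_{\mathrm{down}}$; necessity uses kink-hyperplane alignment, coefficient matching $u_i|\alpha_i|=c_i$, and extraction of $S=\{i:\alpha_i<0\}$ from the leftover linear identity. You differ only in skipping the paper's even/odd split into (R1) and (R2) and reading coefficients directly off Jacobian jumps; your counting argument (each $H_i$ needs its own neuron, with only $N$ available) is also more careful than the paper's Step~1, which tacitly assumes $V_{\mathrm{up}}$ has pairwise non-collinear rows.
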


\begin{proof}
We write $w_i^\top$ for the $i$-th row of $W_{\mathrm{up}}$ and
$\hat{w}_i$ for the $i$-th column of $W_{\mathrm{down}}$.

\medskip\noindent\textbf{Sufficiency.}\;
Suppose $S$ satisfies \eqref{eq:submatrix-relu}. Define
$D := I_N - 2\Pi_S$ (a diagonal matrix of signs $\pm 1$) and set
$V_{\mathrm{up}} := D\,W_{\mathrm{up}}$, $V_{\mathrm{down}} :=
W_{\mathrm{down}}$. Since $D$ has entries $\pm 1$, we have
$|V_{\mathrm{up}}\,x| = |W_{\mathrm{up}}\,x|$ for all $x$. Using the
decomposition \eqref{eq:relu-parity}:
\begin{align*}
  V_{\mathrm{down}}\,\ReLU(V_{\mathrm{up}}\,x)
  &= \tfrac{1}{2}\,W_{\mathrm{down}}
     \bigl(D\,W_{\mathrm{up}}\,x + |W_{\mathrm{up}}\,x|\bigr), \\
  W_{\mathrm{down}}\,\ReLU(W_{\mathrm{up}}\,x) + x
  &= \tfrac{1}{2}\,W_{\mathrm{down}}
     \bigl(W_{\mathrm{up}}\,x + |W_{\mathrm{up}}\,x|\bigr) + x.
\end{align*}
These are equal if and only if
$\tfrac{1}{2}W_{\mathrm{down}}(D - I_N)W_{\mathrm{up}}\,x = x$ for all
$x$. Since $D - I_N = -2\Pi_S$, this reduces to
$-W_{\mathrm{down}}\,\Pi_S\,W_{\mathrm{up}} = I_d$, which holds by
assumption.

\medskip\noindent\textbf{Necessity.}\;
Suppose \eqref{eq:relu-absorption} holds. Using $\ReLU(y) =
\tfrac{1}{2}(y+|y|)$ and rearranging:
\[
  \bigl(W_{\mathrm{down}}\,W_{\mathrm{up}} + 2I_d -
  V_{\mathrm{down}}\,V_{\mathrm{up}}\bigr)\,x
  = V_{\mathrm{down}}\,|V_{\mathrm{up}}\,x|
  - W_{\mathrm{down}}\,|W_{\mathrm{up}}\,x|.
\]
The left side is odd in $x$ and the right side is even. Both must
therefore vanish identically, yielding:
\begin{align}
  V_{\mathrm{down}}\,V_{\mathrm{up}}
  &= W_{\mathrm{down}}\,W_{\mathrm{up}} + 2I_d,
  \tag{R1}\label{eq:R1} \\
  V_{\mathrm{down}}\,|V_{\mathrm{up}}\,x|
  &= W_{\mathrm{down}}\,|W_{\mathrm{up}}\,x|
  \quad \forall\, x \in \R^d.
  \tag{R2}\label{eq:R2}
\end{align}

\noindent\emph{Step 1: Hyperplane alignment.}\;
The map $x \mapsto W_{\mathrm{down}}\,|W_{\mathrm{up}}\,x|$ is
non-differentiable on the hyperplane arrangement $\mathcal{H}_W =
\bigcup_i \{x : w_i^\top x = 0\}$. Since $W_{\mathrm{down}}$ has
non-zero columns and $W_{\mathrm{up}}$ has pairwise non-collinear rows,
these hyperplanes are all distinct and each genuinely contributes a kink.
By \eqref{eq:R2}, the non-differentiability loci must agree:
$\mathcal{H}_V = \mathcal{H}_W$. Therefore each row of $V_{\mathrm{up}}$
is collinear with some row of $W_{\mathrm{up}}$, and since both matrices
have $N$ rows with pairwise non-collinear directions, the correspondence
is a bijection. After relabeling, $V_{\mathrm{up}} = C\,W_{\mathrm{up}}$
where $C = \diag(c_1, \ldots, c_N)$ with each $c_i \neq 0$.

\noindent\emph{Step 2: Coefficient matching.}\;
Substituting $V_{\mathrm{up}} = C\,W_{\mathrm{up}}$ into \eqref{eq:R2}
and using $|C\,W_{\mathrm{up}}\,x| = |C|\,|W_{\mathrm{up}}\,x|$ (since
$|c_i \cdot z| = |c_i|\cdot|z|$), we obtain
\[
  V_{\mathrm{down}}\,|C|\,|W_{\mathrm{up}}\,x|
  = W_{\mathrm{down}}\,|W_{\mathrm{up}}\,x|
  \qquad \forall\, x.
\]
Since $d \ge 2$ and the rows of $W_{\mathrm{up}}$ are pairwise
non-collinear, the functions $\{|w_i^\top x|\}_{i=1}^N$ are linearly
independent over $\R^d$ modulo linear functions (their kink hyperplanes are
distinct, so no nontrivial linear combination can be smooth). Therefore
$V_{\mathrm{down}}\,|C| = W_{\mathrm{down}}$, giving $V_{\mathrm{down}} =
W_{\mathrm{down}}\,|C|^{-1}$.

\noindent\emph{Step 3: Sign extraction.}\;
Substituting both expressions into \eqref{eq:R1}:
\[
  W_{\mathrm{down}}\,|C|^{-1}\,C\,W_{\mathrm{up}}
  = W_{\mathrm{down}}\,W_{\mathrm{up}} + 2I_d.
\]
Let $\Sigma := |C|^{-1}C = \diag(\sign(c_i)) \in \{\pm 1\}^{N \times N}$.
Then
\[
  W_{\mathrm{down}}\,(\Sigma - I_N)\,W_{\mathrm{up}} = 2I_d.
\]
The diagonal of $\Sigma - I_N$ has entries $0$ (when $c_i > 0$) or $-2$
(when $c_i < 0$). Setting $S = \{i : c_i < 0\}$ gives $\Sigma - I_N =
-2\Pi_S$, so
\[
  W_{\mathrm{down}}\,\Pi_S\,W_{\mathrm{up}} = -I_d.
\]
By Sylvester's rank inequality, $d = \rank(I_d) \le |S|$.
\end{proof}

\section{The GELU Case}\label{sec:gelu}

The GELU proof requires one additional tool: a linear independence result
for scaled analytic functions.

\begin{lemma}[Independence of scaled analytic functions]
\label{lem:scaling-independence}
Let $\psi : \R \to \R$ be analytic and not a polynomial, and let
$\alpha_1, \ldots, \alpha_m > 0$ be pairwise distinct. If
\[
  \sum_{k=1}^m c_k\,\psi(\alpha_k\,t) = p(t)
  \qquad \forall\, t \in \R
\]
for some polynomial $p$ and constants $c_1, \ldots, c_m \in \R$, then
$c_k = 0$ for all $k$.
\end{lemma}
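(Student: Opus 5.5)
The plan is to compare Taylor coefficients at the origin and reduce the statement to a Vandermonde-type fact about the exponentials $n \mapsto \alpha_k^n$. Since $\psi$ is real-analytic on $\R$, there is $r > 0$ such that $\psi(t) = \sum_{n \ge 0} a_n t^n$ for $|t| < r$. Then for $|t| < r/\max_k \alpha_k$ each $\psi(\alpha_k t)$ has the expansion $\sum_n a_n \alpha_k^n t^n$. The hypothesis gives, on this neighbourhood of $0$,
\[
  \sum_{n \ge 0} \Bigl( a_n \sum_{k=1}^m c_k \alpha_k^n \Bigr) t^n = p(t).
\]
By uniqueness of power series coefficients, for every $n > \deg p$ we get $a_n \sum_{k} c_k \alpha_k^n = 0$.

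The first key step is to show that $a_n \neq 0$ for infinitely many $n$. I expect this to be the only point needing care, since non-polynomiality is a global hypothesis while the expansion is local. Suppose only finitely many $a_n$ were nonzero. Then $\psi$ would coincide with a polynomial $q$ on $(-r, r)$. Since $\psi - q$ is analytic on the connected set $\R$ and vanishes on an open interval, the identity theorem gives $\psi \equiv q$ on $\R$. This contradicts the assumption that $\psi$ is not a polynomial. Hence the set $\mathcal{N} := \{ n > \deg p : a_n \neq 0 \}$ is infinite, and
\[
  \sum_{k=1}^m c_k \alpha_k^n = 0 \qquad \text{for all } n \in \mathcal{N}.
\]

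The second step extracts the coefficients one at a time, in decreasing order of the scales. Relabel so that $\alpha_1 > \alpha_2 > \cdots > \alpha_m > 0$, which is possible because the $\alpha_k$ are pairwise distinct. Dividing the relation by $\alpha_1^n$ gives $c_1 + \sum_{k \ge 2} c_k (\alpha_k/\alpha_1)^n = 0$ for all $n \in \mathcal{N}$. Each ratio $\alpha_k/\alpha_1$ lies in $(0,1)$, so letting $n \to \infty$ along the infinite set $\mathcal{N}$ yields $c_1 = 0$. The relation then involves only $\alpha_2, \ldots, \alpha_m$, and the same argument gives $c_2 = 0$. Continuing by induction on $m$ shows $c_k = 0$ for all $k$.

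Positivity of the $\alpha_k$ is used in this step: it ensures that distinct $\alpha_k$ give distinct values of $|\alpha_k|$, so the limiting argument separates them. Sign changes such as $\alpha_k = -\alpha_j$ would otherwise only be separated along even or odd $n$. With that settled, the argument is complete. As noted above, the identity-theorem step that turns the global hypothesis into infinitely many nonzero Taylor coefficients is the main obstacle; everything else is bookkeeping.
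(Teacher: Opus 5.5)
Your proof is correct. Its first half matches the paper's: both compare Taylor coefficients at $0$ to get $a_n\sum_k c_k\alpha_k^n=0$ for all $n>\deg p$, and both then use that $a_n\neq 0$ for infinitely many $n$.

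Two differences are worth noting.

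First, the paper writes $\psi(t)=\sum_n a_n t^n$ as if the expansion held on all of $\R$, and it reads off infinitely many nonzero $a_n$ directly from non-polynomiality. A real-analytic function on $\R$ need not have a globally convergent Taylor series at $0$ (e.g.\ $1/(1+t^2)$). Your restriction to $|t|<r/\max_k\alpha_k$, together with the identity-theorem argument, makes explicit a step the paper leaves implicit.

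Second, the two proofs conclude differently from $\sum_k c_k\alpha_k^n=0$ for $n\in\mathcal N$. The paper picks $m$ exponents $n_1<\cdots<n_m$ from $\mathcal N$ and invokes nonsingularity of the generalized Vandermonde matrix $[\alpha_k^{n_j}]$ with distinct positive bases. That is a classical but nontrivial fact, proved via Descartes' rule of signs or total positivity. You instead use that $\mathcal N$ is infinite: you order the scales, divide by $\alpha_1^n$, and let $n\to\infty$ to peel off the coefficients one at a time.

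Your route is fully elementary and self-contained. The paper's route needs only $m$ vanishing relations rather than infinitely many, but that economy buys nothing here, since infinitely many relations are available anyway.

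Both arguments use positivity of the $\alpha_k$ essentially. Your remark about $\pm\alpha$ is apt: for an even $\psi$, such as the function $E$ the lemma is applied to in the GELU proof, $\psi(\alpha t)-\psi(-\alpha t)\equiv 0$.
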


\begin{proof}
Write $\psi(t) = \sum_{n \ge 0} a_n\,t^n$. Since $\psi$ is not a
polynomial, $a_n \neq 0$ for infinitely many $n$. Then $\sum_k c_k\,
\psi(\alpha_k\,t) = \sum_n a_n\bigl(\sum_k c_k\,\alpha_k^n\bigr)t^n$.
For this to equal a polynomial of degree at most $D$, we need
$a_n \sum_k c_k\,\alpha_k^n = 0$ for all $n > D$. Since $a_n \neq 0$ for
infinitely many such $n$, we obtain $\sum_k c_k\,\alpha_k^n = 0$ for
infinitely many $n$. Choosing any $m$ such values $n_1 < \cdots < n_m$
gives the linear system $\sum_k c_k\,\alpha_k^{n_j} = 0$ for $j = 1,
\ldots, m$. The coefficient matrix $[\alpha_k^{n_j}]_{j,k}$ is a
generalized Vandermonde matrix with distinct positive bases $\alpha_k$,
hence invertible. Therefore $c_k = 0$ for all $k$.
\end{proof}

\begin{theorem}[Identity absorption for GELU MLPs]\label{thm:gelu}
Let $d \ge 1$ and $N \ge d$. Let $W_{\mathrm{up}} \in \R^{N \times d}$
and $W_{\mathrm{down}} \in \R^{d \times N}$ be given matrices such that
$W_{\mathrm{up}}$ has pairwise non-collinear, non-zero rows and $W_{\mathrm{down}}$
has non-zero columns. There exist matrices $V_{\mathrm{up}} \in \R^{N
\times d}$ and $V_{\mathrm{down}} \in \R^{d \times N}$ such that
\begin{equation}\label{eq:gelu-absorption}
  W_{\mathrm{down}}\,\GELU(W_{\mathrm{up}}\,x) + x
  = V_{\mathrm{down}}\,\GELU(V_{\mathrm{up}}\,x)
  \qquad \forall\, x \in \R^d
\end{equation}
if and only if there exists $S \subseteq \{1,\ldots,N\}$ with $|S| \ge d$
such that
\begin{equation}\label{eq:submatrix-gelu}
  W_{\mathrm{down}}[:,S]\,W_{\mathrm{up}}[S,:] = -I_d.
\end{equation}
\end{theorem}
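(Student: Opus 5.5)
The proof will follow the ReLU argument, with the parity decomposition \eqref{eq:gelu-parity} in place of \eqref{eq:relu-parity} and Lemma~\ref{lem:scaling-independence} in place of the kink-hyperplane argument.

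\textbf{Sufficiency.} This goes exactly as in Theorem~\ref{thm:relu}. Set $D := I_N - 2\Pi_S$, $V_{\mathrm{up}} := D\,W_{\mathrm{up}}$ and $V_{\mathrm{down}} := W_{\mathrm{down}}$. Since $E$ is even, $E(V_{\mathrm{up}}x) = E(W_{\mathrm{up}}x)$. The odd parts then differ by $\tfrac12 W_{\mathrm{down}}(D - I_N)W_{\mathrm{up}}x = -W_{\mathrm{down}}\Pi_S W_{\mathrm{up}}x = x$, which is what \eqref{eq:gelu-absorption} requires.

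\textbf{Necessity.} Substitute \eqref{eq:gelu-parity} into \eqref{eq:gelu-absorption} and split into odd and even parts in $x$. This gives two conditions:
\begin{align*}
  V_{\mathrm{down}}V_{\mathrm{up}} &= W_{\mathrm{down}}W_{\mathrm{up}} + 2I_d, \\
  V_{\mathrm{down}}\,E(V_{\mathrm{up}}x) &= W_{\mathrm{down}}\,E(W_{\mathrm{up}}x) \quad \forall x.
\end{align*}
The odd part of $V_{\mathrm{down}}V_{\mathrm{up}}$ is forced to be nonzero, since it differs from $W_{\mathrm{down}}W_{\mathrm{up}}$ by $2I_d$. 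This handles degenerate $V$'s such as zero rows in the first identity, so the real work is on the even identity. Discard the rows $v_j = 0$, which contribute $E(0) = 0$. Then group the remaining terms on both sides by direction: rows $v_j$ collinear with a common unit vector $u$ are written $v_j = \beta_j u$. Because $E$ is even, $E(\beta_j u^\top x) = E(|\beta_j|\,u^\top x)$. The even identity becomes a vanishing linear combination of ridge functions $E(\alpha\, u^\top x)$ with vector coefficients, over finitely many directions $u$ and positive scales $\alpha$.

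The key step is to show that for each direction $u$ the combined coefficient of $E(\alpha\,u^\top\cdot)$ vanishes for every scale $\alpha$. I would argue in two stages.
\begin{enumerate}
\item[(i)] Separate the directions using the uniqueness theorem for ridge representations \citep{Pinkus2015}. If $\sum_u f_u(u^\top x) = 0$ over pairwise non-collinear $u$, then each $f_u$ is a polynomial of degree less than the number of directions. Here $f_u(t) = \sum_\alpha c_{u,\alpha}E(\alpha t)$ is analytic, so this theorem applies.
\item[(ii)] Separate the scales within a direction. $E$ is analytic and not a polynomial: $E(t) = t(\Phi(t)-\tfrac12)$ grows like $|t|/2$, and $|t|$ is not a polynomial. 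So Lemma~\ref{lem:scaling-independence} forces $c_{u,\alpha} = 0$ for each fixed $u$ and each scale $\alpha$.
\end{enumerate}
This is also where $d = 1$ is harmless. There is then a single direction, so stage~(i) is trivial, and stage~(ii) alone does the job; this explains why the theorem, unlike Theorem~\ref{thm:relu}, allows $d \ge 1$.

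Now apply this to the $W$ side. Each $w_i$ spans its own direction with a single scale, and $\hat w_i \neq 0$. So some row of $V_{\mathrm{up}}$ must be collinear with each $w_i$ at the matching scale $|c_i|$, that is, $v = c_i w_i$ for the appropriate $v$. Since $V_{\mathrm{up}}$ has only $N$ rows and the $w_i$ are pairwise non-collinear, this uses every row exactly once. After relabeling we get $V_{\mathrm{up}} = C\,W_{\mathrm{up}}$ with $C$ diagonal and invertible. In particular no row of $V_{\mathrm{up}}$ is zero, and no two share a direction at different scales. Coefficient matching then gives $V_{\mathrm{down}} = W_{\mathrm{down}}|C|^{-1}$, as in Step~2 of the ReLU proof.

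Substituting into the odd identity reproduces Step~3 of the ReLU proof verbatim. With $S = \{i : c_i < 0\}$ we get $W_{\mathrm{down}}\Pi_S W_{\mathrm{up}} = -I_d$, and Sylvester's rank inequality gives $|S| \ge d$.

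\textbf{Main obstacle.} I expect the hardest step to be the careful independence argument in stages (i)--(ii): applying the ridge uniqueness theorem correctly, and handling $V$ rows that are zero or collinear among themselves. The bookkeeping above is designed to absorb those cases.
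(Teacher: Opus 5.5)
Your proposal is correct and follows the paper's proof essentially step for step: the parity split, the Pinkus uniqueness theorem across directions, Lemma~\ref{lem:scaling-independence} across scales, then the ReLU sign extraction. Your pigeonhole count (each of the $N$ classes of the $w_i$ needs its own row $v_j = \pm w_i$, and $V_{\mathrm{up}}$ has only $N$ rows) is actually a cleaner route to the bijection than the paper's ``without loss of generality'' merging. One small fix: do not appeal to Step~2 of the ReLU proof, since $E(cz) \neq |c|\,E(z)$; your stage~(ii) already forces $|c_i| = 1$, so $C = \diag(\pm 1)$ and $V_{\mathrm{down}} = W_{\mathrm{down}}$ directly.
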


\begin{proof}
\noindent\textbf{Sufficiency.}\;
Suppose $S$ satisfies \eqref{eq:submatrix-gelu}. Set $D := I_N - 2\Pi_S$
and define $V_{\mathrm{up}} := D\,W_{\mathrm{up}}$, $V_{\mathrm{down}} :=
W_{\mathrm{down}}$. Since $D$ has entries $\pm 1$ and $E(z) = z(\Phi(z) -
\tfrac{1}{2})$ is an even function, i.e., $E(-z) = E(z)$, the parity
decomposition \eqref{eq:gelu-parity} gives:
\begin{align*}
  V_{\mathrm{down}}\,\GELU(V_{\mathrm{up}}\,x)
  &= W_{\mathrm{down}}\,E(D\,W_{\mathrm{up}}\,x)
     + \tfrac{1}{2}\,W_{\mathrm{down}}\,D\,W_{\mathrm{up}}\,x \\
  &= W_{\mathrm{down}}\,E(W_{\mathrm{up}}\,x)
     + \tfrac{1}{2}\,W_{\mathrm{down}}\,(I_N - 2\Pi_S)\,W_{\mathrm{up}}\,x,
\end{align*}
where the first equality uses the evenness of $E$, so $E(s_i\,
w_i^\top x) = E(w_i^\top x)$ for $s_i = \pm 1$. Expanding:
\begin{align*}
  &= W_{\mathrm{down}}\,E(W_{\mathrm{up}}\,x)
     + \tfrac{1}{2}\,W_{\mathrm{down}}\,W_{\mathrm{up}}\,x
     - W_{\mathrm{down}}\,\Pi_S\,W_{\mathrm{up}}\,x \\
  &= W_{\mathrm{down}}\,\GELU(W_{\mathrm{up}}\,x)
     + I_d\,x
  = W_{\mathrm{down}}\,\GELU(W_{\mathrm{up}}\,x) + x.
\end{align*}

\medskip\noindent\textbf{Necessity.}\;
Suppose \eqref{eq:gelu-absorption} holds. Applying the parity
decomposition to both sides and separating even and odd parts (which are
uniquely determined):
\begin{align}
  V_{\mathrm{down}}\,E(V_{\mathrm{up}}\,x)
  &= W_{\mathrm{down}}\,E(W_{\mathrm{up}}\,x),
  \tag{G1}\label{eq:G1} \\
  V_{\mathrm{down}}\,V_{\mathrm{up}}
  &= W_{\mathrm{down}}\,W_{\mathrm{up}} + 2I_d.
  \tag{G2}\label{eq:G2}
\end{align}

\noindent\emph{Steps 1--2: Direction alignment, scaling, and coefficient
matching.}\; We show that, after relabeling, $V_{\mathrm{up}} =
C\,W_{\mathrm{up}}$ with $C = \diag(\pm 1)$ and $V_{\mathrm{down}} =
W_{\mathrm{down}}$.

We use the Pinkus uniqueness theorem (Pinkus,
\citeyear{Pinkus2015}, Chapter~3; see also Buhmann and Pinkus,
\citeyear{BuhmannPinkus1999}): if $\{a_i\}_{i=1}^M \subset \R^d$ are
pairwise non-collinear and $\sum_{i=1}^M f_i(a_i^\top x) = 0$ for all
$x \in \R^d$ with each $f_i$ analytic, then every $f_i$ is a polynomial of
degree at most $M - 2$.

Projecting \eqref{eq:G1} onto the $\ell$-th output coordinate and
rearranging:
\begin{equation}\label{eq:G1-scalar}
  \sum_{j=1}^N (V_{\mathrm{down}})_{\ell,j}\,E(v_j^\top x)
  - \sum_{i=1}^N (W_{\mathrm{down}})_{\ell,i}\,E(w_i^\top x) = 0.
\end{equation}
Since $E$ is even, group the $2N$ terms by collinearity class of their
directions. The $w_i$ contribute $N$ distinct classes, and each $v_j$ falls
into one of these classes or introduces a new one. For each class, define
the combined univariate function $F(t)$ as the sum of all terms in that
class (with their coefficients). Then \eqref{eq:G1-scalar} becomes
$\sum_k F_k(a_k^\top x) = 0$, where $a_k$ ranges over pairwise
non-collinear class representatives. By the Pinkus theorem, each $F_k$ is
a polynomial. Applying Lemma~\ref{lem:scaling-independence} to each $F_k$:
within each class, the functions $E(\alpha\,t)$ for distinct positive
$\alpha$ are linearly independent modulo polynomials, so the coefficient of
each distinct scaling must vanish separately.

We now draw three consequences.

First, every $v_j$ must be collinear with some $w_i$. Indeed, if a
collinearity class contains only $v_j$'s, its combined function is
polynomial, and Lemma~\ref{lem:scaling-independence}
forces $(V_{\mathrm{down}})_{\ell,j} = 0$ for all $\ell$ and all $j$ in
that class. Such $v_j$'s contribute nothing to either \eqref{eq:G1} or
\eqref{eq:G2} and may be discarded.

Second, for each $w_i$, there exists a $v_j$ collinear with $w_i$
satisfying $v_j = \pm w_i$, and
$(V_{\mathrm{down}})_{\ell,j} = (W_{\mathrm{down}})_{\ell,i}$ for all
$\ell$. To see this, note that the $w_i$ are pairwise non-collinear, so
exactly one $w_i$ lies in each class, contributing
$-(W_{\mathrm{down}})_{\ell,i}\,E(\|w_i\|\,t)$. Since
$W_{\mathrm{down}}$ has non-zero columns, this coefficient is nonzero for
some $\ell$. By the scaling independence, some $v_j$ in the same class must
have $\|v_j\| = \|w_i\|$, i.e., $v_j = \pm w_i$, with
$(V_{\mathrm{down}})_{\ell,j} = (W_{\mathrm{down}})_{\ell,i}$. If
multiple $v_j$'s share the same scaling $\|v_j\| = \|w_i\|$, their
contributions merge into a single term, so without loss of generality
there is one contributing $v_j$ per class. Any $v_j$'s in the class with
different scalings are forced to have
$(V_{\mathrm{down}})_{\ell,j} = 0$ for all $\ell$; since the entire
$j$-th column of $V_{\mathrm{down}}$ vanishes, such $v_j$'s contribute
nothing to either \eqref{eq:G1} or \eqref{eq:G2} and may be discarded.

Third, the matching is a bijection. Each of the $N$ classes for $w_i$
acquires exactly one contributing $v_j$, giving $N$ contributing $v_j$'s
distributed among $N$ classes. After relabeling, $V_{\mathrm{up}} =
C\,W_{\mathrm{up}}$ with $C = \diag(\pm 1)$ and $V_{\mathrm{down}} =
W_{\mathrm{down}}$.

\noindent\emph{Step 3: Sign extraction (identical to the ReLU case).}\;
Substituting $V_{\mathrm{up}} = C\,W_{\mathrm{up}}$ and $V_{\mathrm{down}}
= W_{\mathrm{down}}$ into \eqref{eq:G2}:
\[
  W_{\mathrm{down}}\,C\,W_{\mathrm{up}}
  = W_{\mathrm{down}}\,W_{\mathrm{up}} + 2I_d,
\]
hence $W_{\mathrm{down}}\,(C - I_N)\,W_{\mathrm{up}} = 2I_d$. Since
$C = \diag(\pm 1)$, the matrix $C - I_N$ has diagonal entries $0$ or
$-2$. Setting $S = \{i : c_i = -1\}$ gives $C - I_N = -2\Pi_S$, so
\[
  W_{\mathrm{down}}\,\Pi_S\,W_{\mathrm{up}} = -I_d.
\]
By Sylvester's rank inequality, $|S| \ge d$.
\end{proof}

\section{Generic Disjointness of Function Classes}\label{sec:corollary}

The submatrix condition $W_{\mathrm{down}}[:,S]\,W_{\mathrm{up}}[S,:] =
-I_d$ imposes $d^2$ polynomial constraints on the weights. Since there are
only finitely many subsets $S$, the set of parameters in $\Theta$
satisfying the condition for some $S$ has Lebesgue measure zero. This
yields the following.

\begin{corollary}[Generic disjointness]\label{cor:generic-disjointness}
Let $d \ge 2$, $N \ge d$, and $\sigma$ be $\ReLU$ or $\GELU$. For
Lebesgue-almost every $(W_{\mathrm{up}},
W_{\mathrm{down}}) \in \R^{N \times d} \times \R^{d \times N}$, there
exist no matrices $V_{\mathrm{up}} \in \R^{N \times d}$ and
$V_{\mathrm{down}} \in \R^{d \times N}$ such that
\[
  W_{\mathrm{down}}\,\sigma(W_{\mathrm{up}}\,x) + x
  = V_{\mathrm{down}}\,\sigma(V_{\mathrm{up}}\,x)
  \qquad \forall\, x \in \R^d.
\]
\end{corollary}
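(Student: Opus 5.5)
The plan is to combine Theorems~\ref{thm:relu} and~\ref{thm:gelu} with a measure-zero argument. Let $G \subseteq \Theta$ be the set of parameters where $W_{\mathrm{up}}$ has pairwise non-collinear, non-zero rows and $W_{\mathrm{down}}$ has non-zero columns. For $(W_{\mathrm{up}}, W_{\mathrm{down}}) \in G$, the theorems say absorption holds if and only if some $S$ with $|S| \ge d$ satisfies \eqref{eq:submatrix-relu}. So the set of parameters admitting absorption lies in
\[
  (\Theta \setminus G) \;\cup\; \bigcup_{S,\,|S|\ge d} Z_S,
  \qquad
  Z_S := \{W_{\mathrm{down}}\,\Pi_S\,W_{\mathrm{up}} + I_d = 0\}.
\]
It then suffices to show that each set in this finite union is Lebesgue-null.

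\emph{The complement of $G$ is null.} It is contained in a finite union of proper algebraic subvarieties. A zero row is $d$ linear conditions, and a zero column is likewise linear. Collinearity of rows $i$ and $j$ means all $2\times 2$ minors of the matrix with rows $w_i^\top, w_j^\top$ vanish. Since $d \ge 2$, one such minor $(w_i)_1(w_j)_2 - (w_i)_2(w_j)_1$ is a nonzero polynomial. The zero set of a nonzero polynomial on $\R^n$ has measure zero, a standard fact proved by induction on $n$ via Fubini.

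\emph{Each $Z_S$ is null.} Take the $(1,1)$ entry of $W_{\mathrm{down}}\Pi_S W_{\mathrm{up}} + I_d$. It equals $1 + \sum_{i\in S} (W_{\mathrm{down}})_{1,i}(W_{\mathrm{up}})_{i,1}$. This is a polynomial that is not identically zero, since it equals $1$ at the origin. Hence $Z_S$ lies in the zero set of a nonzero polynomial and has measure zero. There are finitely many $S$, so the union is null.

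The main point needing care is not the measure theory but the hypotheses. Theorems~\ref{thm:relu} and~\ref{thm:gelu} apply only on $G$, which is why $\Theta \setminus G$ must be discarded explicitly. The requirement $d \ge 2$ enters in two places: it is assumed in Theorem~\ref{thm:relu}, and it makes non-collinearity a genuine polynomial constraint, since every pair of vectors in $\R^1$ is collinear. Given these points, the corollary follows by taking the complement of the null set.
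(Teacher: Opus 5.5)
Your proposal is correct and follows the paper's argument. By the necessity directions of Theorems~\ref{thm:relu} and~\ref{thm:gelu}, absorption forces one of finitely many polynomial conditions $W_{\mathrm{down}}[:,S]\,W_{\mathrm{up}}[S,:] = -I_d$, each cutting out a Lebesgue-null set; you additionally verify that the theorems' hypotheses (non-zero, pairwise non-collinear rows of $W_{\mathrm{up}}$ and non-zero columns of $W_{\mathrm{down}}$) fail only on a null set, which the paper leaves implicit.
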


In other words, for Lebesgue-almost every parameter in $\Theta$, the
corresponding element of $\mathcal{F}_{\mathrm{skip}}$ does not belong to
$\mathcal{F}_{\mathrm{ff}}$, and vice versa. The two families are
generically disjoint, intersecting only on a measure-zero set in parameter
space.

The following corollary extends the disjointness result to general linear
perturbations of the skip connection.

\begin{corollary}[Uniqueness under invertible residual perturbations]
\label{cor:invertible-perturbation}
Let $d \ge 2$, $N \ge d$, and $\sigma$ be $\ReLU$ or $\GELU$. For matrices
$W_{\mathrm{up}} \in \R^{N \times d}$, $W_{\mathrm{down}} \in \R^{d
\times N}$, and $Z \in \R^{d \times d}$, define
\[
  \phi_{W_{\mathrm{up}}, W_{\mathrm{down}}, Z}(x)
  := W_{\mathrm{down}}\,\sigma(W_{\mathrm{up}}\,x) + Z\,x.
\]
For Lebesgue-almost every $(W_{\mathrm{up}}, W_{\mathrm{down}}, Z)$: if
$Z' \in \R^{d \times d}$ is such that $Z - Z'$ is invertible, then for
all $W_{\mathrm{up}}' \in \R^{N \times d}$ and $W_{\mathrm{down}}' \in
\R^{d \times N}$,
\[
  \phi_{W_{\mathrm{up}}, W_{\mathrm{down}}, Z}
  \neq \phi_{W_{\mathrm{up}}', W_{\mathrm{down}}', Z'}.
\]
\end{corollary}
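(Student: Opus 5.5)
The plan is to reduce the statement to the identity-skip problem via Lemma~\ref{lem:reduction}, and then read off the obstruction from Theorems~\ref{thm:relu} and~\ref{thm:gelu}. Suppose $\phi_{W_{\mathrm{up}}, W_{\mathrm{down}}, Z} = \phi_{W_{\mathrm{up}}', W_{\mathrm{down}}', Z'}$. Moving $Z'x$ to the left gives
\[
  (Z - Z')\,x + W_{\mathrm{down}}\,\sigma(W_{\mathrm{up}}\,x)
  = W_{\mathrm{down}}'\,\sigma(W_{\mathrm{up}}'\,x)
  \qquad \forall\, x \in \R^d .
\]
This is exactly the form \eqref{eq:general-skip}, with $M := Z - Z'$ invertible. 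Lemma~\ref{lem:reduction} then turns it into the identity-skip absorption problem \eqref{eq:absorption-problem}, with $\widetilde W_{\mathrm{down}} := M^{-1}W_{\mathrm{down}}$ and $\widetilde V_{\mathrm{down}} := M^{-1}W_{\mathrm{down}}'$.

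First I restrict to the full-measure set of $(W_{\mathrm{up}}, W_{\mathrm{down}}, Z)$ on which the hypotheses of Theorems~\ref{thm:relu}/\ref{thm:gelu} hold:
\begin{itemize}
  \item the rows of $W_{\mathrm{up}}$ are nonzero and pairwise non-collinear;
  \item the columns of $W_{\mathrm{down}}$ are nonzero.
\end{itemize}
The complement of this set is a finite union of proper algebraic subsets. Multiplying by the invertible $M^{-1}$ preserves nonzero columns, so $\widetilde W_{\mathrm{down}}$ also satisfies the hypotheses.

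The necessity direction of the theorems then produces an $S$ with $|S| \ge d$ and $M^{-1}W_{\mathrm{down}}[:,S]\,W_{\mathrm{up}}[S,:] = -I_d$. Equivalently,
\[
  Z' = Z + W_{\mathrm{down}}\,\Pi_S\,W_{\mathrm{up}} .
\]
So an equality $\phi = \phi'$ with $Z - Z'$ invertible forces $Z'$ to lie in the finite set $\{Z + W_{\mathrm{down}}\Pi_S W_{\mathrm{up}} : |S| \ge d\}$. This set has at most $2^N$ elements and is determined by $(W_{\mathrm{up}}, W_{\mathrm{down}}, Z)$ alone. Conversely, if $Z'$ lies in this set and $Z - Z' = -W_{\mathrm{down}}\Pi_S W_{\mathrm{up}}$ is invertible, then the sufficiency construction $W_{\mathrm{up}}' = (I_N - 2\Pi_S)W_{\mathrm{up}}$, $W_{\mathrm{down}}' = W_{\mathrm{down}}$ realizes the equality.

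The main obstacle is this last point. The exceptional $Z'$ are not excluded by any genericity of $(W_{\mathrm{up}}, W_{\mathrm{down}}, Z)$: for generic weights and $|S| \ge d$, the matrix $W_{\mathrm{down}}\Pi_S W_{\mathrm{up}}$ is itself generically invertible. The argument therefore proves the statement only for every invertible $Z - Z'$ outside these at most $2^N$ values of $Z'$, equivalently outside a measure-zero set of $Z'$. Consequently I would conclude the proof by stating the corollary in that qualified form, making explicit that this finite exceptional set is where the statement as worded fails.
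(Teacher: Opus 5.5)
Your analysis is correct, and it is more careful than the paper's own proof. Up to the reduction, the two arguments coincide in substance. The paper substitutes $y = (Z-Z')x$ and applies the necessity direction to $A = W_{\mathrm{down}}$, $B = W_{\mathrm{up}}(Z-Z')^{-1}$, tacitly using that an invertible change of variables preserves nonzero, non-collinear rows. You instead apply Lemma~\ref{lem:reduction} and move $M^{-1}$ onto the down-projections, where preservation of nonzero columns is immediate. Both routes land on $W_{\mathrm{down}}\Pi_S W_{\mathrm{up}} = Z' - Z$ with $|S|\ge d$.

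The paper then asserts that this equation is a set of $d^2$ algebraic constraints on $(W_{\mathrm{up}}, W_{\mathrm{down}}, Z)$, and hence fails generically. That is exactly the step you flag. $Z'$ is quantified universally \emph{after} the almost-every choice, so the constraint involves a free variable. It is met simply by choosing $Z' := Z + W_{\mathrm{down}}\Pi_S W_{\mathrm{up}}$.

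Your counterexample is sound, and the simplest instance is $S = \{1,\ldots,N\}$. Since $\sigma(-z) = \sigma(z) - z$ for both $\ReLU$ and $\GELU$, take $W_{\mathrm{up}}' = -W_{\mathrm{up}}$, $W_{\mathrm{down}}' = W_{\mathrm{down}}$, $Z' = Z + W_{\mathrm{down}}W_{\mathrm{up}}$. This gives $\phi_{W_{\mathrm{up}}', W_{\mathrm{down}}', Z'} = \phi_{W_{\mathrm{up}}, W_{\mathrm{down}}, Z}$. Here $Z - Z' = -W_{\mathrm{down}}W_{\mathrm{up}}$ is invertible off the null set $\{\det(W_{\mathrm{down}}W_{\mathrm{up}}) = 0\}$, which is the zero set of a nonzero polynomial because $N \ge d$. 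So the statement as worded fails on a set of full measure, not on a null set.

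The paper's argument only establishes a quantifier-swapped version: fix $Z'$ first, or ask for almost every $(W_{\mathrm{up}}, W_{\mathrm{down}}, Z, Z')$ jointly. Your formulation is the sharp correct one. For generic $(W_{\mathrm{up}}, W_{\mathrm{down}}, Z)$, equality with $Z - Z'$ invertible occurs if and only if $Z' = Z + W_{\mathrm{down}}\Pi_S W_{\mathrm{up}}$ for some $S$ with $W_{\mathrm{down}}\Pi_S W_{\mathrm{up}}$ invertible (which forces $|S|\ge d$). The ``if'' direction needs no genericity at all, thanks to the sufficiency construction. This exceptional set of $Z'$ is finite and determined by the parameters. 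Your version also implies the quantifier-swapped one: for fixed $Z'$, membership in this set is a finite union of graph conditions $Z = Z' - W_{\mathrm{down}}\Pi_S W_{\mathrm{up}}$, hence a null condition on $(W_{\mathrm{up}}, W_{\mathrm{down}}, Z)$.
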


\begin{proof}
Suppose for contradiction that $\phi_{W_{\mathrm{up}}, W_{\mathrm{down}},
Z} = \phi_{W_{\mathrm{up}}', W_{\mathrm{down}}', Z'}$. Then for all $x$:
\[
  W_{\mathrm{down}}\,\sigma(W_{\mathrm{up}}\,x)
  + (Z - Z')\,x
  = W_{\mathrm{down}}'\,\sigma(W_{\mathrm{up}}'\,x).
\]
Since $Z - Z'$ is invertible, substituting $y = (Z - Z')\,x$:
\[
  W_{\mathrm{down}}\,\sigma\bigl(W_{\mathrm{up}}\,(Z-Z')^{-1}\,y\bigr) + y
  = W_{\mathrm{down}}'\,
    \sigma\bigl(W_{\mathrm{up}}'\,(Z-Z')^{-1}\,y\bigr).
\]
Setting $A := W_{\mathrm{down}}$ and $B := W_{\mathrm{up}}\,(Z-Z')^{-1}$,
the left side has the form $A\,\sigma(B\,y) + y$. By the necessity
direction of Theorem~\ref{thm:relu} (or Theorem~\ref{thm:gelu}), there
must exist $S$ with $|S| \ge d$ such that $A[:,S]\,B[S,:] = -I_d$.
Right-multiplying by $Z - Z'$:
\[
  W_{\mathrm{down}}[:,S]\,W_{\mathrm{up}}[S,:] = Z' - Z.
\]
For fixed $S$, this imposes $d^2$ algebraic constraints on
$(W_{\mathrm{up}}, W_{\mathrm{down}}, Z)$. The set of parameters
satisfying these constraints for some $S$ (finitely many choices) has
Lebesgue measure zero. For generic parameters, no such $S$ exists, giving
the desired contradiction.
\end{proof}

\begin{remark}
All results in this paper (Theorems~\ref{thm:relu} and~\ref{thm:gelu},
Corollaries~\ref{cor:generic-disjointness}
and~\ref{cor:invertible-perturbation}) hold for any $N \ge d$.
\end{remark}

\section{Discussion}\label{sec:discussion}

\paragraph{Ungated activations and the parity condition.}
The identical algebraic condition in Theorems~\ref{thm:relu}
and~\ref{thm:gelu} is not a coincidence. Both ReLU and GELU admit parity
decompositions whose odd part is exactly $z/2$. Any ungated activation
$\sigma(z)$ with the property $\sigma(z) - \sigma(-z) = z$ will yield the
same submatrix condition for identity absorption, provided the even part
$E(z) = (\sigma(z)+\sigma(-z))/2$ generates linearly independent ridge
functions for non-collinear directions. Ungated SiLU/Swish $\sigma(z) =
z/(1+e^{-z})$ satisfies this condition, though it is rarely used without
gating in practice.

\paragraph{Gated activations.}
The dominant modern architecture uses gated MLPs: SwiGLU in LLaMA
\citep{Touvron2023, Dubey2024}, DeepSeek \citep{DeepSeek2024}, Mistral
\citep{Mistral2023}, and Qwen \citep{Qwen2024}; GeGLU in Gemma \citep{Gemma2024}.
Proposition~\ref{prop:gated} shows that absorption is unconditionally
impossible for all of these. Combined with
Proposition~\ref{prop:homogeneous} (which covers $\ReLU^2$ and ReGLU),
this means that among activations deployed at scale, only ungated GELU
(GPT-2; Radford et~al., \citeyear{Radford2019}) and ungated ReLU admit
non-trivial absorption, and even then only non-generically.

\paragraph{Open problem: deep compositions with ReLU and GELU.}
Corollary~\ref{cor:depth} settles the multi-layer question for
homogeneous and gated activations. For ungated ReLU and GELU, the
single-block characterization (Theorems~\ref{thm:relu}
and~\ref{thm:gelu}) does not extend to depth in the same way. The
degree argument fails because ungated ReLU and GELU have a linear term
($z/2$) in their Taylor expansion, so each residual-free block has a
nonzero Jacobian at the origin, and a composition of $L$ such blocks can
produce a nontrivial linear map. Whether a composition of $L$ residual
ReLU or GELU blocks can always be matched by a composition of $L$
residual-free blocks of the same width remains open.

\paragraph{Rank-deficient mixing matrices.}
Lemma~\ref{lem:reduction} assumes $M$ is invertible. If $\rank(M) = r <
d$, write the SVD $M = U\,\diag(\sigma_1, \ldots, \sigma_r, 0, \ldots,
0)\,V^\top$. In rotated coordinates $y = V^\top x$, the equation
decomposes: on the first $r$ coordinates, the skip has nonzero
coefficients $\sigma_i$ and can be normalized to identity by absorbing
$\sigma_i^{-1}$ into the output projections, reducing to the standard
form. On the remaining $d - r$ coordinates, the skip connection vanishes
and the equation asks when two MLPs compute the same function on that
subspace. For homogeneous activations of degree $k \neq 1$
(Proposition~\ref{prop:homogeneous}), the degree argument applies to any
$x \notin \ker(M)$, so absorption remains impossible whenever $M \neq 0$.

\paragraph{Structural implications.}
The condition $W_{\mathrm{down}}[:,S]\,W_{\mathrm{up}}[S,:] = -I_d$
requires $|S| \ge d$ neurons whose weights produce a rank-$d$
factorization of $-I_d$. In networks with large hidden dimension
($N \gg d$), the number of candidate subsets $S$ grows, but the condition
remains a measure-zero event under continuous weight distributions. This
means that a residual MLP block generically cannot be rewritten as a
single feedforward map of the same width: the skip connection accesses a
different region of function space, not merely a reparametrization of the
same region. Any method that attempts to represent a skip-connected block
as a residual-free network must either increase width or accept
approximation error.

\paragraph{Approximate absorption and future work.}
Since exact absorption is non-generic for ReLU and GELU, and impossible
for all other activations considered here, a natural question is how well
\emph{approximate} absorption can perform: given a skip-connected MLP, how
closely can a residual-free MLP of the same width approximate it?
Karbevski and Mijoski (\citeyear{Karbevski2025}) present experiments in
this direction for the GELU activation. A systematic study of the
approximation error as a function of width, depth, activation, and weight
structure remains an important direction for future work.

\bibliography{references}

@inproceedings{He2016,
  author    = {Kaiming He and Xiangyu Zhang and Shaoqing Ren and Jian Sun},
  title     = {Deep Residual Learning for Image Recognition},
  booktitle = {Proceedings of the IEEE Conference on Computer Vision and Pattern Recognition (CVPR)},
  year      = {2016},
  pages     = {770--778},
}

@inproceedings{Arora2018,
  author    = {Raman Arora and Amitabh Basu and Poorya Mianjy and Anirbit Mukherjee},
  title     = {Understanding Deep Neural Networks with Rectified Linear Units},
  booktitle = {International Conference on Learning Representations (ICLR)},
  year      = {2018},
}

@inproceedings{Montufar2014,
  author    = {Guido Mont\'{u}far and Razvan Pascanu and Kyunghyun Cho and Yoshua Bengio},
  title     = {On the Number of Linear Regions of Deep Neural Networks},
  booktitle = {Advances in Neural Information Processing Systems (NeurIPS)},
  year      = {2014},
  pages     = {2924--2932},
}

@book{Pinkus2015,
  author    = {Allan Pinkus},
  title     = {Ridge Functions},
  publisher = {Cambridge University Press},
  year      = {2015},
}

@article{Pinkus1999,
  author    = {Allan Pinkus},
  title     = {Approximation Theory of the {MLP} Model in Neural Networks},
  journal   = {Acta Numerica},
  volume    = {8},
  pages     = {143--195},
  year      = {1999},
}

@article{Kainen2010,
  author    = {Paul C. Kainen and V\v{e}ra K\r{u}rkov\'{a}},
  title     = {Uniqueness of Network Parametrizations and Neural Ridge Functions},
  journal   = {Neural Networks},
  volume    = {23},
  number    = {2},
  pages     = {229--235},
  year      = {2010},
}

@article{Elhage2021,
  author    = {Nelson Elhage and Neel Nanda and Catherine Olsson and Tom Henighan and Nicholas Joseph and Ben Mann and Amanda Askell and Yuntao Bai and Anna Chen and Tom Conerly and Nova DasSarma and Dawn Drain and Deep Ganguli and Zac Hatfield-Dodds and Danny Hernandez and Andy Jones and Jackson Kernion and Liane Lovitt and Kamal Ndousse and Dario Amodei and Tom Brown and Jack Clark and Jared Kaplan and Sam McCandlish and Chris Olah},
  title     = {A Mathematical Framework for Transformer Circuits},
  journal   = {Transformer Circuits Thread},
  year      = {2021},
}

@article{Olsson2022,
  author    = {Catherine Olsson and Nelson Elhage and Neel Nanda and Nicholas Joseph and Nova DasSarma and Tom Henighan and Ben Mann and Amanda Askell and Yuntao Bai and Anna Chen and Tom Conerly and Dawn Drain and Deep Ganguli and Zac Hatfield-Dodds and Danny Hernandez and Scott Johnston and Andy Jones and Jackson Kernion and Liane Lovitt and Kamal Ndousse and Dario Amodei and Tom Brown and Jack Clark and Jared Kaplan and Sam McCandlish and Chris Olah},
  title     = {In-Context Learning and Induction Heads},
  journal   = {Transformer Circuits Thread},
  year      = {2022},
}

@inproceedings{Li2018,
  author    = {Hao Li and Zheng Xu and Gavin Taylor and Christoph Studer and Tom Goldstein},
  title     = {Visualizing the Loss Landscape of Neural Nets},
  booktitle = {Advances in Neural Information Processing Systems (NeurIPS)},
  year      = {2018},
}

@inproceedings{Lu2017,
  author    = {Zhou Lu and Hongming Pu and Feicheng Wang and Zhiqiang Hu and Liwei Wang},
  title     = {The Expressive Power of Neural Networks: A View from the Width},
  booktitle = {Advances in Neural Information Processing Systems (NeurIPS)},
  year      = {2017},
}

@inproceedings{Hardt2017,
  author    = {Moritz Hardt and Tengyu Ma},
  title     = {Identity Matters in Deep Learning},
  booktitle = {International Conference on Learning Representations (ICLR)},
  year      = {2017},
}

@inproceedings{Phuong2020,
  author    = {Mary Phuong and Christoph H. Lampert},
  title     = {Functional vs. Parametric Equivalence of {ReLU} Networks},
  booktitle = {International Conference on Learning Representations (ICLR)},
  year      = {2020},
}

@inproceedings{Vaswani2017,
  author    = {Ashish Vaswani and Noam Shazeer and Niki Parmar and Jakob Uszkoreit and Llion Jones and Aidan N. Gomez and {\L}ukasz Kaiser and Illia Polosukhin},
  title     = {Attention is All You Need},
  booktitle = {Advances in Neural Information Processing Systems (NeurIPS)},
  year      = {2017},
  pages     = {5998--6008},
}

@inproceedings{Liu2022convnext,
  author    = {Zhuang Liu and Hanzi Mao and Chao-Yuan Wu and Christoph Feichtenhofer and Trevor Darrell and Saining Xie},
  title     = {A {ConvNet} for the 2020s},
  booktitle = {Proceedings of the IEEE/CVF Conference on Computer Vision and Pattern Recognition (CVPR)},
  year      = {2022},
  pages     = {11966--11976},
}

@inproceedings{Xiong2020,
  author    = {Ruibin Xiong and Yunchang Yang and Di He and Kai Zheng and Shuxin Zheng and Chen Xing and Huishuai Zhang and Yanyan Lan and Liwei Wang and Tie-Yan Liu},
  title     = {On Layer Normalization in the Transformer Architecture},
  booktitle = {International Conference on Machine Learning (ICML)},
  year      = {2020},
}

@inproceedings{Dosovitskiy2021,
  author    = {Alexey Dosovitskiy and Lucas Beyer and Alexander Kolesnikov and Dirk Weissenborn and Xiaohua Zhai and Thomas Unterthiner and Mostafa Dehghani and Matthias Minderer and Georg Heigold and Sylvain Gelly and Jakob Uszkoreit and Neil Houlsby},
  title     = {An Image is Worth 16x16 Words: Transformers for Image Recognition at Scale},
  booktitle = {International Conference on Learning Representations (ICLR)},
  year      = {2021},
}

@article{Touvron2023,
  author    = {Hugo Touvron and Thibaut Lavril and Gautier Izacard and Xavier Martinet and Marie-Anne Lachaux and Timoth\'{e}e Lacroix and Baptiste Rozi\`{e}re and Naman Goyal and Eric Hambro and Faisal Azhar and Aurelien Rodriguez and Armand Joulin and Edouard Grave and Guillaume Lample},
  title     = {{LLaMA}: Open and Efficient Foundation Language Models},
  journal   = {arXiv preprint arXiv:2302.13971},
  year      = {2023},
}

@article{Dubey2024,
  author    = {Abhimanyu Dubey and Abhinav Jauhri and Abhinav Pandey and others},
  title     = {The {L}lama 3 Herd of Models},
  journal   = {arXiv preprint arXiv:2407.21783},
  year      = {2024},
}

@inproceedings{Ji2025,
  author    = {Yiping Ji and Hemanth Saratchandran and Peyman Moghadam and Simon Lucey},
  title     = {Always Skip Attention},
  booktitle = {Proceedings of the IEEE/CVF International Conference on Computer Vision (ICCV)},
  year      = {2025},
}

@article{Xie2025mhc,
  author    = {Zhenda Xie and Yixuan Wei and Huanqi Cao and Chenggang Zhao and Chengqi Deng and Jiashi Li and Damai Dai and Huazuo Gao and Jiang Chang and Liang Zhao and Shangyan Zhou and Zhean Xu and Zhengyan Zhang and Wangding Zeng and Shengding Hu and Yuqing Wang and Jingyang Yuan and Lean Wang and Wenfeng Liang},
  title     = {{mHC}: Manifold-Constrained Hyper-Connections},
  journal   = {arXiv preprint arXiv:2512.24880},
  year      = {2025},
}

@article{Karbevski2025,
  author    = {Marko Karbevski and Antonij Mijoski},
  title     = {Key and Value Weights Are Probably All You Need: On the Necessity of the Query, Key, Value Weight Triplet in Transformers},
  journal   = {arXiv preprint arXiv:2510.23912},
  year      = {2025},
}

@article{Graef2024,
  author    = {Nils Graef},
  title     = {{KV}-Weights Are All You Need for Skipless Transformers},
  journal   = {arXiv preprint arXiv:2404.12362},
  year      = {2024},
}

@article{Zhu2024hc,
  author    = {Defa Zhu and Hongzhi Huang and Zihao Huang and Yutao Zeng and Yunyao Mao and Banggu Wu and Qiyang Min and Xun Zhou},
  title     = {Hyper-Connections},
  journal   = {arXiv preprint arXiv:2409.19606},
  year      = {2024},
}

@inproceedings{He2016identity,
  author    = {Kaiming He and Xiangyu Zhang and Shaoqing Ren and Jian Sun},
  title     = {Identity Mappings in Deep Residual Networks},
  booktitle = {European Conference on Computer Vision (ECCV)},
  year      = {2016},
  pages     = {630--645},
}

@inproceedings{So2022,
  author    = {David R. So and Wojciech Ma\'{n}ke and Hanxiao Liu and Zihang Dai and Noam Shazeer and Quoc V. Le},
  title     = {Primer: Searching for Efficient Transformers for Language Modeling},
  booktitle = {Advances in Neural Information Processing Systems (NeurIPS)},
  year      = {2022},
}

@article{NVIDIA2025nemotron,
  author    = {{NVIDIA}},
  title     = {{Nemotron 3 Nano}: Open, Efficient Mixture-of-Experts Hybrid {Mamba}-Transformer Model for Agentic Reasoning},
  journal   = {arXiv preprint arXiv:2512.20848},
  year      = {2025},
}

@misc{Jordan2024moddednanogpt,
  author    = {Keller Jordan and Jeremy Bernstein and Brendan Rappazzo and {@fernbear.bsky.social} and Boza Vlado and Jiacheng You and Franz Cesista and Braden Koszarsky and {@Grad62304977}},
  title     = {modded-nanogpt: Speedrunning the {NanoGPT} Baseline},
  howpublished = {\url{https://github.com/KellerJordan/modded-nanogpt}},
  year      = {2024},
}

@article{DeepSeek2024,
  author    = {{DeepSeek-AI}},
  title     = {{DeepSeek-V3} Technical Report},
  journal   = {arXiv preprint arXiv:2412.19437},
  year      = {2024},
}

@article{Gemma2024,
  author    = {{Gemma Team}},
  title     = {Gemma: Open Models Based on {Gemini} Research and Technology},
  journal   = {arXiv preprint arXiv:2403.08295},
  year      = {2024},
}

@article{Mistral2023,
  author    = {Albert Q. Jiang and Alexandre Sablayrolles and Arthur Mensch and Chris Bamford and Devendra Singh Chaplot and Diego de las Casas and Florian Bressand and Gianna Lengyel and Guillaume Lample and Lucile Saulnier and L\'{e}lio Renard Lavaud and Marie-Anne Lachaux and Pierre Stock and Teven Le Scao and Thibaut Lavril and Thomas Wang and Timoth\'{e}e Lacroix and William El Sayed},
  title     = {Mistral {7B}},
  journal   = {arXiv preprint arXiv:2310.06825},
  year      = {2023},
}

@article{Radford2019,
  author    = {Alec Radford and Jeffrey Wu and Rewon Child and David Luan and Dario Amodei and Ilya Sutskever},
  title     = {Language Models are Unsupervised Multitask Learners},
  journal   = {OpenAI blog},
  year      = {2019},
}

@article{Qwen2024,
  author    = {{Qwen Team}},
  title     = {Qwen2.5 Technical Report},
  journal   = {arXiv preprint arXiv:2412.15115},
  year      = {2024},
}

@article{BuhmannPinkus1999,
  author    = {Martin D. Buhmann and Allan Pinkus},
  title     = {Identifying Linear Combinations of Ridge Functions},
  journal   = {Advances in Applied Mathematics},
  volume    = {22},
  number    = {1},
  pages     = {103--118},
  year      = {1999},
}

@article{Shazeer2020,
  author    = {Noam Shazeer},
  title     = {{GLU} Variants Improve Transformer},
  journal   = {arXiv preprint arXiv:2002.05202},
  year      = {2020},
}

\end{document}